\newtheorem{theorem}{Theorem}
\newtheorem{lemma}{Lemma}
\newtheorem{corollary}{Corollary}
\newtheorem{assumption}{Assumption}
\theoremstyle{definition}
\newtheorem{definition}{Definition}
\theoremstyle{remark}
\newtheorem*{remark}{Remark}
\title{Structure Learning from Time Series\\ with False Discovery Control}
\author{
     Bernat Guillen Pegueroles$^{1,2}$\thanks{This work was completed under the auspices of the IBM Science for Social Good initiative.}, Bhanukiran Vinzamuri$^2$, Karthikeyan Shanmugam$^2$, \\Steve Hedden$^3$, Jonathan D.\ Moyer$^3$, and Kush R.\ Varshney$^2$ \vspace{4pt} \\
  $^1$Program in Applied and Computational Mathematics, Princeton University,  Princeton, NJ 08544 \\
  \texttt{bernatp@princeton.edu} \vspace{4pt}\\
   $^2$IBM Research, Yorktown Heights, NY 10598 \\
  \texttt{\{bhanukiran.vinzamuri@,karthikeyan.shanmugam2@,krvarshn@us.\}ibm.com}  \vspace{4pt}\\
  $^3$Frederick S. Pardee Center for International Futures, Josef Korbel School of International Studies,\\ University of Denver, Denver, CO 80210 \\
 \texttt{\{steve.hedden,jonathan.moyer\}@du.edu} \\
}
\begin{document}

\maketitle

\begin{abstract}
We consider the Granger causal structure learning problem from time series data. Granger causal algorithms predict a `Granger causal effect' between two variables by testing if prediction error of one decreases significantly in the absence of the other variable among the predictor covariates. Almost all existing Granger causal algorithms condition on a large number of variables (all but two variables) to test for effects between a pair of variables. We propose a new structure learning algorithm called MMPC-p inspired by the well known MMHC algorithm for non-time series data. We show that under some assumptions, the algorithm provides false discovery rate control. The algorithm is sound and complete when given access to perfect directed information testing oracles. We also outline a novel tester for the linear Gaussian case. We show through our extensive experiments that the MMPC-p algorithm scales to larger problems and has improved statistical power compared to existing state of the art for large sparse graphs. We also apply our algorithm on a global development dataset and validate our findings with subject matter experts.
\end{abstract}

\section{Introduction}
\label{sec:intro}

Decisions to set policies and conduct interventions ought to be supported by evidence and scenario analysis.  Supporting decision making in this way, however, is difficult since it is usually not possible to conduct randomized controlled trials over policy decisions such as a country's investment in sanitation or primary education and see the effect on indicators such as number of patents produced or tuberculosis mortality rate.  Understanding the causal structure of the many different variables, indicators, and possible interventions relevant to such decisions is challenging because of their intricate interdependencies and the small cardinality of noisy samples coupled with a large number of variables.  One approach for developing such understanding is through painstaking theorization and validation of small sets of variables over many years \cite{HughesH2006}. An alternative that we focus on in this paper is to estimate the structure of a Bayesian network from an observed time series.

Existing approaches for the problem of structure learning from time series observations include the Granger Causality (GC) algorithm \cite{barnett2014mvgc}. This has been recently formalized in terms of directed information graphs \cite{quinn2015directed,eichler2012graphical} as a Bayesian network structure recovery problem on time series. The GC approach has good statistical properties because it conditions on \textit{all} other variables to isolate the pair in question \cite{cooklearning}. However, with finite samples and very large number of variables, the statistical power of the algorithm significantly reduces due to large conditioning sets. Inspired by the Peter and Clark (PC) algorithm for causal discovery for non-time series data, recently proposed variants perform tests that condition on a reduced subset, beginning with a complete graph and pruning it with pairwise tests  \cite{moneta2011causal}; this approach yields many false positives while also having scaling issues.

In this paper, we propose a new algorithm, MMPC-p, that is scalable and has provably strong p-value control to prevent false discoveries using techniques from \cite{strobl2016estimating,armen2011unified}.  This proposed algorithm is inspired by the MMHC algorithm for causal structure learning in non-time series observational settings \cite{tsamardinos2006max}.  It begins with an empty graph, adds edges to form candidate parent sets, and subsequently prunes them in a two-phase approach.  We show it to be sound, complete and equip it with false discovery rate (FDR) control under assumptions we describe in the sequel.  

The proposed MMPC-p algorithm relies on some form of independency testing on pairs of random processes.  Due to autocorrelations across time, we cannot use conditional independence tests directly.  We consider two testers based on directed information, an information-theoretic measure of predictive information flow between processes for linear Gaussian models. The first is a na\"{\i}ve directed information test that ignores correlations across time but requires fewer samples.  The second computes directed information in a more principled way but requires a greater number of samples.

We conduct a detailed comparison study of GC, PC, and MMPC-p on generated data and find MMPC-p to have better scaling and error control empirically. From our synthetic experiments, we find that MMPC-p has higher statistical power for sparse large graphs than the alternatives: GC and PC. We also apply MMPC-p to a real-world data set of global development indicators from 186 countries over more than fifty years and compare the learned causal relationships to the validated relationships in the International Futures (IFs) program \cite{HughesH2006}. There are systematic differences in the two sets of relationships that we detail later, but the ones found by the proposed algorithm have some validity from the policy expert perspective. 

The main contributions of this work are: (1) an MMPC-p algorithm for time series data inspired by the MMHC algorithm, (2) a method to control false discoveries with our approach under weak assumptions on Type II error, (3) exhaustive experiments comparing the performance of MMPC-p with the modified PC and modified GC algorithms \cite{cooklearning} (we show that MMPC-p performs well for large sparse graphs in terms of both omission and commission errors), and (4) a case study on a global development dataset with input from subject matter experts.  

\section{Related Work}
Most recent work in causal structure learning has focused on issues related to undersampling. In \cite{hyttinen2017constraint,plis2015rate}, causal time scale structures are learned from subsampled measurement time scale graphs and data. A variant of this was studied in \cite{gong2017causal}, where the authors address the issue of causal structure learning from temporally aggregated time series. We consider the learning problem at measurement time scales only. Recent work \cite{cooklearning} has drawn attention to the need for evaluating algorithms on the measurement time scale problem, which is used by some of the algorithms that deal with under sampling. Regression-based methods have also been used for estimating the causal impact \cite{brodersen2015inferring}, which is quantified as the counterfactual response in a synthetic control setting. In contrast, here we focus on the structure learning aspect of the problem. In \cite{moneta2011causal}, the PC algorithm is extended for time series under the assumption that the measurement and time scales were approximately equal. Another variant called modified PC has been presented in \cite{cooklearning}. We actually compare our results to this variant in the empirical section. In \cite{barnett2014mvgc}, the authors present techniques for estimating multivariate Granger causality from time series data, both unconditional and conditional, in the time and frequency domains.  In contrast, \cite{lozano2009spatial} explores combining graphical modeling with Granger causality to address climate change problems. These papers use the Granger causal algorithm that conditions on all the variables but the pair of variables in questions. We actually compare to a variant of these approaches as described in \cite{cooklearning}. More recently, the importance of FDR control is being emphasized in causal structure learning problems. An approach for Bayesian network structure learning with FDR control was presented in \cite{armen2011unified}. In \cite{chaudhry2017uncertainty}, the authors present p-value estimates for high-dimensional Granger causal inference under assumptions of sub-Gaussianity on the coefficients. PC-p \cite{strobl2016estimating} is an extension of PC which computes edge-specific p-values and controls the FDR across edges. In contrast to these algorithms, our MMPC-p is for time series, works under general assumptions and is inspired by the MMHC algorithm. We also formally prove FDR control guarantees and back it up with results in our empirical section.

\section{Formal Problem Definition and Preliminaries}\label{formal}

\textbf{Notation.}
Consider $m$ random processes over time slots $0 \ldots T$ such that the $i$th random process sequence is denoted by:
\[\mathbf{X}_i = (X_{i,0} \ldots X_{i,T}), ~i \in [1:m].\]

Further, let $\mathbf{X}_i^{(t)}=(X_{i,0} \ldots X_{i,t})$ denote the sequence up to time $t$.
Let $\mathbf{X}_i^{(t_0,t_1)}$ denote the $i$th random process from time $t_0$ to $t_1$. Consider a subset of random processes $A \subset [1:m]$. Then, the random variables of all the processes in $A$ from time $t_0$ to $t_1$ are denoted $\mathbf{X}_A^{(t_0,t_1)}$. $\mathbf{X}_{A,t}$ denotes the random variables belonging to the set of processes $A$ at time $t$. Let $\mathbf{X}_A$ and $\mathbf{X}_A^{(t)}$ denote the quantities analogous to the single random process case as described above.

We will primarily consider random processes that take values in a finite alphabet. This is to simplify the presentation, avoiding all measure-theoretic issues. The experiments, however, are performed with respect to real-valued random processes. We make the following assumptions.
\begin{assumption}[Strict Causality]\label{assum1}
The $m$ random processes follow the dynamics:
 \begin{align}\label{fac}
   P_{\mathbf{X}_{}[1:m]}\left(\mathbf{x}_{[1:m]} \right)= \prod \limits_{t=1}^T \prod \limits_{i \in [1:m]} P_{X_{i,t}\lvert \mathbf{X}_{[1:m],t-1} }(x_{i,t}\lvert \mathbf{x}_{[1:m],t-1}) .
 \end{align}
\end{assumption}

 The dynamics are order-1 Markov. This supposes that there are no instantaneous interactions in the system conditioned on the past, i.e.\ the system is strictly causal.
 
Following \cite{quinn2015directed}, let us denote the above causal conditioning over time in \eqref{fac} using:
\begin{align}
  P_{\mathbf{X}_{[1:m]}}\left(\mathbf{x}_{[1:m]} \right) = \prod \limits_{i \in [1:m]} 
   P_{X_i\lVert X_{[1:m]}} (\mathbf{x}_i \lVert \mathbf{x}_{[1:m]}).
\end{align}
Here, the notation $P(\cdot\lVert\cdot)$ subsumes the recursive causal conditioning over time in (\ref{fac}).
 
\begin{assumption}[Causal Sufficiency]\label{assum2}
There are no hidden confounders and all variables are measured.
\end{assumption}
\begin{assumption}[Positivity]\label{assum3}
The joint distribution satisfies: $P_{\mathbf{X}_{[1:m]}}(\mathbf{x}_{[1:m]}) >0$ for all possible realizations $\mathbf{x}_{[1:m]}$ in the domain.
\end{assumption}
This is a sort of `faithfulness assumption': the data does not exhibit any near-deterministic relationships. We review relevant results from \cite{quinn2015directed} and \cite{eichler2012graphical} under the above assumptions.

\begin{definition}
   \textit{Causally conditioned directed information} from random process $\mathbf{X}_i$ to $\mathbf{X}_j$ conditioned on the random processes in the set $A$ is given by:
   \begin{equation}
     I(\mathbf{X}_j \rightarrow \mathbf{X}_i \lVert \mathbf{X}_A)
     = \frac{1}{T} \left(\sum \limits_{t=1}^T I \left(X_{j}^{(t-1)}; X_{i,t} \lvert X_{i}^{(t-1)},\mathbf{X}_A^{(t-1)}\right) \right).
   \end{equation}
 Here, $I(\cdot ; \cdot \lvert \cdot)$  represents the standard conditional mutual information measure in information theory.
\end{definition}
In other words, it is the time average of the mutual information between process $j$ until time $t-1$ and process $i$ at time $t$ given the past of processes in $i \cup A$ until time $t-1$. It is related to Granger causality, signifying the reduction in prediction loss that process $j$ until $t-1$ gives over and above the processes in $i \cup A$ until time $t-1$. The notion is exact for prediction under log loss. However, \cite{jiao2015justification} presents arguments as to why the log loss is the correct metric for measuring value of extra side information in prediction as only this measure satisfies a data-processing axiom.

\begin{definition}\label{DGraph}
 Directed Information (DI) graph $G$ is a DAG $G=(V=\{\mathbf{X}_1, \ldots\mathbf{X}_m\},E)$ associated with the $m$ random processes is defined as follows: a directed edge $(\mathbf{X}_i,\mathbf{X}_j) \in E$ iff $I(\mathbf{X}_i \rightarrow \mathbf{X}_j \lVert \mathbf{X}_{[1:m]-\{i,j\}})>0$.
\end{definition}
This is a graph where every node is a random process. We interchangeably use $i$ and $\mathbf{X}_i$ when talking about nodes in the graph $G$. Let $\mathrm{Pa}(i) = \{j: (j,i) \in E \}$ be the set of directed parents of node $i$ in the DI graph $G$. Let $\mathrm{Ch}(i)=\{j: (i,j) \in E\}$ be the set of children of $i$.

\begin{theorem}[\cite{quinn2015directed,eichler2012graphical}]
Let $\mathrm{Pa}(i)$ be the set of directed parents according to the DI graph. Then, if the positivity condition holds for all $m$ random processes over time and if the system is strictly causal, then almost surely:
 \[P_{\mathbf{X}_{[1:m]}}(\mathbf{x}_{[1:m]}) = \prod \limits_{i \in [m]} P_{\mathbf{X}_i \lVert \mathbf{X}_{\mathrm{Pa}(i)}} \left ( \mathbf{x}_i \lVert \mathbf{x}_{\mathrm{Pa}(i)} \right). \]
\end{theorem}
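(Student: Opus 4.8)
The plan is to start from the strict-causality factorization of Assumption~\ref{assum1} and show that, for each process $i$, the full conditioning set $[1:m]$ appearing in every one-step transition kernel can be shrunk to the self-past together with $\mathrm{Pa}(i)$ without altering the kernel. Since the joint already factors as $\prod_{t=1}^{T}\prod_{i\in[1:m]} P_{X_{i,t}\mid \mathbf{X}_{[1:m],t-1}}$, it suffices to prove, for every $i$ and every $t$, the identity $P_{X_{i,t}\mid \mathbf{X}_{[1:m],t-1}} = P_{X_{i,t}\mid X_{i,t-1},\,\mathbf{X}_{\mathrm{Pa}(i),t-1}}$. Regrouping the resulting product over $t$ then reproduces exactly the causal-conditioning product $\prod_{i} P_{\mathbf{X}_i\lVert \mathbf{X}_{\mathrm{Pa}(i)}}$ by the definition of $P(\cdot\lVert\cdot)$, which is what the theorem asserts.

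First I would translate Definition~\ref{DGraph} into a conditional-independence statement. If $j\notin\mathrm{Pa}(i)$, then $I(\mathbf{X}_j\rightarrow\mathbf{X}_i\lVert \mathbf{X}_{[1:m]-\{i,j\}})=0$; because conditional mutual information is nonnegative, every summand in the directed-information sum must vanish, giving $I(X_j^{(t-1)};X_{i,t}\mid X_i^{(t-1)},\mathbf{X}_{[1:m]-\{i,j\}}^{(t-1)})=0$ for all $t$. The order-1 Markov property of Assumption~\ref{assum1} forces each transition kernel to depend on the past only through the time-$(t-1)$ slice, so this collapses to the single-slice independence $X_{i,t}\bigCI X_{j,t-1}\mid \mathbf{X}_{[1:m]-\{j\},t-1}$. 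Equivalently, any single non-parent coordinate $X_{j,t-1}$ may be dropped from the conditioning of $X_{i,t}$.

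The main obstacle is upgrading these one-at-a-time deletions into the simultaneous statement $X_{i,t}\bigCI \mathbf{X}_{N,t-1}\mid X_{i,t-1},\mathbf{X}_{\mathrm{Pa}(i),t-1}$, where $N=[1:m]\setminus(\{i\}\cup\mathrm{Pa}(i))$ is the set of non-parents, since pairwise deletions do not in general combine into a joint deletion. This is precisely where Positivity (Assumption~\ref{assum3}) enters: strict positivity of the joint guarantees the intersection axiom of the graphoid, namely $(A\bigCI B\mid C,D)$ and $(A\bigCI C\mid B,D)$ imply $A\bigCI B,C\mid D$. I would then induct on $|N|$. Writing $N=\{j_1,\dots,j_k\}$ and letting $P=\{X_{i,t-1}\}\cup\mathbf{X}_{\mathrm{Pa}(i),t-1}$, I apply the inductive hypothesis with the enlarged base $(P,X_{j_k,t-1})$ to pull out $X_{j_1,t-1},\dots,X_{j_{k-1},t-1}$, then fuse the outcome with the single-slice independence for $j_k$ through the intersection axiom, thereby removing all of $N$ at once.

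Once the simultaneous independence is in hand, the transition kernel collapses to $P_{X_{i,t}\mid X_{i,t-1},\mathbf{X}_{\mathrm{Pa}(i),t-1}}$ for every $i$ and $t$. Substituting this into the Assumption~\ref{assum1} factorization and regrouping the product over $t$ into the causal-conditioning notation finishes the proof. The ``almost surely'' qualifier merely excludes the measure-zero set of realizations on which these conditional kernels could be ill-defined, an exceptional set that is empty in the present finite-alphabet, strictly positive setting.
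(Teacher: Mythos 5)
This theorem is imported by the paper from \cite{quinn2015directed,eichler2012graphical} and stated without any proof, so there is no in-paper argument to compare yours against; the only meaningful check is whether your reconstruction is sound, and it essentially is — it follows the same route as the cited sources. Your chain of reasoning (non-parenthood gives $I(\mathbf{X}_j \rightarrow \mathbf{X}_i \lVert \mathbf{X}_{[1:m]-\{i,j\}})=0$; nonnegativity of conditional mutual information kills each summand; the order-1 Markov structure of Assumption~\ref{assum1} reduces this to a slice-level conditional independence; and the intersection property supplied by Positivity (Assumption~\ref{assum3}) merges the one-at-a-time deletions into $X_{i,t}\bigCI \mathbf{X}_{N,t-1}\mid X_{i,t-1},\mathbf{X}_{\mathrm{Pa}(i),t-1}$) is the standard argument, and your induction with the enlarged base set is valid, since the needed single-coordinate independencies hold with all remaining coordinates in the conditioning set. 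Two steps deserve to be written out rather than asserted. First, the collapse from the full-past statement $X_{i,t}\bigCI \mathbf{X}_j^{(t-1)}\mid \mathbf{X}_{[1:m]-\{j\}}^{(t-1)}$ to the single-slice statement is not purely ``the Markov property'': you have two almost-sure representations of the same transition kernel, one measurable with respect to $\mathbf{X}_{[1:m],t-1}$ and one with respect to $\mathbf{X}_{[1:m]-\{j\}}^{(t-1)}$, and concluding that the kernel is a function of the common coordinates $\mathbf{X}_{[1:m]-\{j\},t-1}$ is itself an intersection-type step — unproblematic here precisely because the alphabet is finite and the joint is strictly positive, but it silently uses positivity a second time. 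Second, the final regrouping: the causal-conditioning factor $P_{\mathbf{X}_i \lVert \mathbf{X}_{\mathrm{Pa}(i)}}$ conditions each term on the \emph{full} pasts $\mathbf{X}_i^{(t-1)},\mathbf{X}_{\mathrm{Pa}(i)}^{(t-1)}$, not just the time-$(t-1)$ slice, so to match it you need the additional (positivity-free) contraction step: combine $X_{i,t}\bigCI \mathbf{X}_{N,t-1}\mid X_{i,t-1},\mathbf{X}_{\mathrm{Pa}(i),t-1}$ with the Markov property to get $X_{i,t}\bigCI \bigl(\mathbf{X}_{[1:m]}^{(t-2)},\mathbf{X}_{N,t-1}\bigr)\mid X_{i,t-1},\mathbf{X}_{\mathrm{Pa}(i),t-1}$, which shows the full-past kernel and the slice kernel coincide. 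Both are routine repairs; neither undermines the proof.
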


\begin{corollary}[Local Causal Markov Property \cite{quinn2015directed,eichler2012graphical}] When the system of $m$ random processes satisfies the positivity constraint and satisfies strict causality:
  \[ I \left(\mathbf{X}_{A} \rightarrow \mathbf{X}_{i} \lVert   \mathbf{X}_{\textbf{Pa}(i)}\right) =0 , ~\forall A: A \subseteq [1:m]-\{i \cup \mathrm{Pa}(i)\}.\]
\end{corollary}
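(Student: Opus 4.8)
The plan is to reduce the vanishing of the causally conditioned directed information to a family of conditional independence statements, one per time slot, each of which follows from the parent factorization in the preceding theorem together with positivity.

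First I would expand the quantity of interest using the definition of directed information, specializing the source to $A$ and the conditioning set to $\mathrm{Pa}(i)$:
\[
I\!\left(\mathbf{X}_A \rightarrow \mathbf{X}_i \lVert \mathbf{X}_{\mathrm{Pa}(i)}\right) = \frac{1}{T}\sum_{t=1}^{T} I\!\left(\mathbf{X}_A^{(t-1)}; X_{i,t} \mid X_i^{(t-1)}, \mathbf{X}_{\mathrm{Pa}(i)}^{(t-1)}\right).
\]
Each summand is a conditional mutual information, hence nonnegative, and it vanishes precisely when its two arguments are conditionally independent given the conditioning variables. Thus it suffices to establish, for every $t$, the conditional independence
\[
\mathbf{X}_A^{(t-1)} \bigCI X_{i,t} \mid X_i^{(t-1)}, \mathbf{X}_{\mathrm{Pa}(i)}^{(t-1)}.
\]

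Next I would extract a local Markov property from the theorem. Its factorization writes the joint law as a product of causally conditioned kernels $P_{\mathbf{X}_j \lVert \mathbf{X}_{\mathrm{Pa}(j)}}$, each of which unrolls over time (via the recursive conditioning subsumed by $P(\cdot \lVert \cdot)$) into $\prod_t P_{X_{j,t} \mid X_j^{(t-1)}, \mathbf{X}_{\mathrm{Pa}(j)}^{(t-1)}}$. Comparing this, factor by factor, with the chain-rule factorization of the same joint and invoking positivity (so that all conditionals are well defined and the almost-sure identities can be manipulated pointwise), I would conclude that the one-step conditional of $X_{i,t}$ given the entire past of all $m$ processes depends only on the past of $i$ and of its parents, i.e.\
\[
P_{X_{i,t} \mid \mathbf{X}_{[1:m]}^{(t-1)}} = P_{X_{i,t} \mid X_i^{(t-1)}, \mathbf{X}_{\mathrm{Pa}(i)}^{(t-1)}} \quad \text{almost surely}.
\]
Finally I would close the gap between the full past and the smaller set $A$. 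Writing $w$ for a realization of $(X_i^{(t-1)}, \mathbf{X}_{\mathrm{Pa}(i)}^{(t-1)})$, $u$ for a realization of $\mathbf{X}_A^{(t-1)}$, and $r$ for the remaining coordinates of the past (so that $w,u,r$ together exhaust $\mathbf{X}_{[1:m]}^{(t-1)}$, using $A \subseteq [1:m]-\{i \cup \mathrm{Pa}(i)\}$), I would marginalize the full-past conditional:
\[
P(x_{i,t} \mid w, u) = \sum_{r} P(x_{i,t} \mid w, u, r)\, P(r \mid w, u) = \sum_{r} P(x_{i,t} \mid w)\, P(r \mid w, u) = P(x_{i,t} \mid w).
\]
This is exactly the desired conditional independence, so every summand above is zero and the directed information vanishes.

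The main obstacle I anticipate is the middle step: rigorously passing from the product factorization of the \emph{joint} distribution to the claim that the one-step conditional of $X_{i,t}$ genuinely ignores the non-parent past. One must rule out the possibility that dependence carried by a different factor reintroduces dependence on $\mathbf{X}_A^{(t-1)}$; this is where positivity is essential, as it guarantees non-degenerate conditionals and promotes the almost-sure matching of factors to a pointwise identity. Once the local Markov property is in hand, the marginalization over $r$ is routine bookkeeping.
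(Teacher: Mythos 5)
The paper itself offers no proof of this corollary: it is stated with citations and imported wholesale from \cite{quinn2015directed,eichler2012graphical}, so your derivation can only be judged on its own merits and against the standard argument in those references --- which is essentially the route you take. Your outline is correct: you expand the (set-source) directed information into per-time-slot conditional mutual informations, reduce the claim to the one-step Markov property $P_{X_{i,t}\mid \mathbf{X}_{[1:m]}^{(t-1)}}=P_{X_{i,t}\mid X_i^{(t-1)},\mathbf{X}_{\mathrm{Pa}(i)}^{(t-1)}}$, and then perform a correct marginalization over the remaining coordinates $r$ to drop down to the subset $A$. Note that you implicitly generalize the paper's definition of directed information, which is stated for a single source process, to a set-valued source $\mathbf{X}_A$; that generalization is exactly the convention the corollary requires and is consistent with \cite{quinn2015directed}.

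The one step you describe as a plan rather than execute is the middle one, and your phrasing there (``comparing factor by factor'' the parent factorization with the chain rule) is not by itself a valid inference: two factorizations of the same joint distribution cannot in general be matched factor by factor. The standard way to make this rigorous is by marginalization rather than comparison: summing the factorization theorem's product over all variables at times $>t$, and then over $x_{j,t}$ for $j\neq i$, eliminates exactly the corresponding factors (each such factor is a normalized conditional of its argument given variables at strictly earlier times, so the sums telescope from the latest time backward), leaving
\begin{equation*}
P\bigl(\mathbf{x}_{[1:m]}^{(t-1)},x_{i,t}\bigr)
=P\bigl(\mathbf{x}_{[1:m]}^{(t-1)}\bigr)\,
P\bigl(x_{i,t}\mid \mathbf{x}_i^{(t-1)},\mathbf{x}_{\mathrm{Pa}(i)}^{(t-1)}\bigr),
\end{equation*}
and dividing by $P(\mathbf{x}_{[1:m]}^{(t-1)})$ --- legitimate by the positivity assumption (Assumption~\ref{assum3}) --- yields precisely the local Markov property your final step consumes. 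Since you correctly single this out as the crux, correctly assign positivity its role, and the surrounding steps are sound, the shortfall is one of execution detail rather than of idea; filling in the telescoping-marginalization argument would make the proof complete.
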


\begin{assumption}\label{assum4}
  If $I(\mathbf{X}_i \rightarrow \mathbf{X}_j \lVert \mathbf{X}_{[m]-\{i,j\}})>0$, then $I(\mathbf{X}_i \rightarrow \mathbf{X}_j \lVert \mathbf{X}_{A}>0)$, for all sets $A \subset [1:m]-\{i,j\}$.
\end{assumption}

\section{Algorithm: MMPC-p}

Inspired by the MMHC algorithm for observational causal discovery \cite{tsamardinos2006max} with i.i.d.\ data, we introduce an adaptation called the MMPC-p algorithm (max-min parents) for Granger causality. The MMPC-p algorithm uses a DI Tester as an oracle instead of a Conditional Independence (CI) Tester. We will prove an upper bound on the p-values of the edges obtained and show that p-value control is possible in this case under some weak assumptions. 

\textbf{DI Testing Oracle }$\mathrm{DI}(i,j,A)$: This DI testing function outputs the probability (or p-value) of the event $I(\mathbf{X}_i \rightarrow \mathbf{X}_j \lVert \mathbf{X}_A)=0$ for any $A \subset [1:m]$ given the dataset. We will first assume this oracle that outputs p-value to specify our MMPC-p algorithm.

Let us assume we have a measure of association 
\begin{align}\label{assoc}
\mathrm{Assoc}_{\alpha}(i\to j; A) = \alpha- \min(\alpha,DI(i,j,A)).
\end{align}

Define the functions max-min association and argmax-min association as follows:
\begin{align*}
&\mathrm{mma}_{\alpha}(j; A) = \max_{i \neq j}\min_{F \subset A} \mathrm{Assoc}_{\alpha}(i\to j; F)\\
&\mathrm{amma}_{\alpha}(j;A) = \arg\max_{i \neq j} \min_{F \subset A} \mathrm{Assoc}_{\alpha}(i\to j; F).
\end{align*}

We now describe the MMPC-p algorithm presented in Algorithm \ref{algo:mmpcp}. It consists of two phases: the first phase picks candidate parents while the second prunes the list of candidate parents picked in the first phase.

\begin{algorithm}
 \KwData{$j, V, \mathrm{Data},\alpha$}
 \KwResult{$\mathrm{Pa}(j)$, the parents of $X_j$}
 $CP(j) = \emptyset$, $~{\cal P}(i \to j)=\emptyset, ~\forall i \neq j$ \;
 \tcc{\textbf{Phase I}}
 \Repeat{assocP = 0}{
  $P=\mathrm{amma}_{\alpha}(X_j; CP(j))$\;
  $\mathrm{assocP}=\text{mma}_{\alpha}(j; CP(j)$\;
  \uIf{$\text{assocP}>0$}{
   $CP(j) = CP(j) \cup P$\;
  }  
 }
 \tcc{\textbf{Phase II}}
 \For{$Y\in CP(j)$}{
 \label{assocY} $\mathrm{assocY} = \min \limits_{F \subset CP(j) \setminus Y} \mathrm{Assoc}_{\alpha}(Y\to j; F)$\;
  
  $F_{\mathrm{min}} = \arg \min \limits_{F \subset CP(j) \setminus Y} \mathrm{Assoc}_{\alpha}(Y\to j; F)$ \;
  
  \uIf{$assocY \neq 0$}{
  \For{$F \subset CP(j) \setminus Y$}{
  $p$ $\leftarrow$ p-value from $DI(Y \to j; F)$\;
  \uIf{$p \leq \alpha$} {
  Insert $p$ into  $\mathcal{P}(Y \to j)$ \;
  }
  }
  }
  \uElse{}{
  $CP(j)= CP(j)\setminus Y$\;
  Empty $\mathcal{P}(Y \to j)$ \;
  }
 }
 $\mathcal{P}(Y \to j)=\mathrm{max} \lbrace \mathcal{P}(Y \to j) \rbrace$\;
 \KwRet{$CP(j),{\cal P}$}
\caption{MMPC-p}
\label{algo:mmpcp}
\end{algorithm}

\begin{assumption}\label{assum5}
If $I(\mathbf{X}_i \rightarrow \mathbf{X}_j \lVert \mathbf{X}_A) >0$, $DI(i,j;A) < \alpha$ for $\alpha$ used in Algorithm \ref{algo:mmpcp}. 
\end{assumption}

The above says that Type II errors are small. Related to the faithfulness assumption, it means there are no very weak dependencies in the system. Similar assumptions have been made for p-value control for causal inference with i.i.d.\ data \cite{strobl2016estimating}.

\begin{lemma}\label{lem:phase1}
Type II error less than $\alpha$ (Assumption \ref{assum5}) implies that $\mathrm{Pa}(j) \subseteq CP(j)$ after Phase I of MMPC-p.   
\end{lemma}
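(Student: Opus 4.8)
The plan is to argue by contradiction: suppose some true parent $i \in \mathrm{Pa}(j)$ is absent from $CP(j)$ when Phase I halts, and show this forces the test $DI(i,j,F)$ on some conditioning set $F$ to simultaneously lie above and below $\alpha$, which is impossible. The argument hinges entirely on reading off what the termination condition guarantees and then chaining Assumptions \ref{assum4} and \ref{assum5}.

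First I would extract information from the halting condition. Phase I exits exactly when $\mathrm{assocP} = \mathrm{mma}_{\alpha}(j; CP(j)) = 0$, and this value is computed on the final set $CP(j)$, since the terminating iteration adds no node (addition requires $\mathrm{assocP}>0$). Every term $\mathrm{Assoc}_{\alpha}(i' \to j; F) = \alpha - \min(\alpha, DI(i',j,F))$ is nonnegative, so each inner minimum $\min_{F \subset CP(j)} \mathrm{Assoc}_{\alpha}(i' \to j; F)$ is nonnegative, and a maximum of nonnegative quantities equalling zero forces each of them to vanish. In particular, for the supposed missing parent $i$ (which is a legitimate index since $i \neq j$), we get $\min_{F \subset CP(j)} \mathrm{Assoc}_{\alpha}(i \to j; F) = 0$; as the minimum ranges over finitely many subsets it is attained, so there is a witness $F^\ast \subset CP(j)$ with $\mathrm{Assoc}_{\alpha}(i \to j; F^\ast) = 0$, i.e.\ $DI(i,j,F^\ast) \geq \alpha$.

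Next I would invoke the structural assumptions to contradict this. Because $i \in \mathrm{Pa}(j)$, Definition \ref{DGraph} gives $I(\mathbf{X}_i \to \mathbf{X}_j \lVert \mathbf{X}_{[1:m]-\{i,j\}}) > 0$. The witness satisfies $F^\ast \subseteq CP(j) \subseteq [1:m]-\{i,j\}$, since $CP(j)$ never contains $j$ (the max-min is over $i' \neq j$) and does not contain $i$ by the contradiction hypothesis; hence Assumption \ref{assum4} applies and yields $I(\mathbf{X}_i \to \mathbf{X}_j \lVert \mathbf{X}_{F^\ast}) > 0$. Assumption \ref{assum5} then gives $DI(i,j,F^\ast) < \alpha$, contradicting $DI(i,j,F^\ast) \geq \alpha$. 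Therefore no true parent can be missing, and $\mathrm{Pa}(j) \subseteq CP(j)$.

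The routine points I would also verify are that Phase I terminates at all (each accepted node strictly enlarges $CP(j)$, and there are only $m-1$ candidates) and that the base case $CP(j)=\emptyset$ is unproblematic (the minimum then reduces to the single term at $F=\emptyset$). The only step demanding genuine care is confirming that the witness $F^\ast$ delivered by the halting condition is an admissible argument for Assumption \ref{assum4} — that it avoids both $i$ and $j$ — since this membership check is precisely what lets the two assumptions be composed and makes the contradiction airtight; this is the step I would state most explicitly.
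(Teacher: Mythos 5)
Your proof is correct and follows essentially the same route as the paper's: a contradiction argument pitting the Phase I halting condition (which yields a witness set $F^\ast$ with $DI(i,j,F^\ast) \geq \alpha$) against the chain of Assumption \ref{assum4} and Assumption \ref{assum5} (which force $DI(i,j,F^\ast) < \alpha$ for any true parent). Your version is somewhat more careful than the paper's — you explicitly invoke Assumption \ref{assum4} and check that $F^\ast$ avoids both $i$ and $j$, steps the paper leaves implicit — but the underlying argument is identical.
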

\begin{proof}
We present a proof by contradiction. Suppose $v \in \mathrm{Pa}(j)$, then $I(v \to j| S) > 0$ $\forall S$. This implies that $DI(v,j;S) < \alpha,~\forall S$ by Assumption \ref{assum5}. Suppose $v$ is not included in $CP(j)$ and Phase I of MMPC-p completes. This implies that when looking at $v$, there exists a subset $S$ such that $\mathrm{Assoc}_{\alpha}(v \to T|S)=0$. This implies that $DI(v,j;S) \geq \alpha$ for that subset $S$ yielding a contradiction. Therefore, node $v$ will be included in $CP(j)$ at the end of Phase I of MMPC-p.
\end{proof}

\begin{lemma}[\cite{strobl2016estimating}]
Consider $m$ CI testers and the following null and alternative hypothesis
\begin{equation}
\label{eqn:mmhchypothesis}
H_0: \text{At least one CI oracle outputs independent}; \quad H_1: \text{All CI oracles output dependent.}
\end{equation}

Assuming that the $i$th CI oracle outputs independent of all other oracles, we can bound the p-value \eqref{eqn:mmhchypothesis} as $p \le \max_{j=1,\ldots,m} p_j$.

\end{lemma}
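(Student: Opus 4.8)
The plan is to recognize this as the classical intersection--union testing (IUT) situation, in which the composite null $H_0$ is the \emph{union} of the individual nulls $H_{0,j}:$ ``the $j$th oracle is testing a genuinely independent relationship,'' while the alternative $H_1$ is their \emph{intersection}. The quantity I want to certify as a valid (possibly conservative) p-value for the composite $H_0$ is $\max_{j} p_j$, and the crux is that validity must hold simultaneously over the entire composite null, not merely at a single distribution.

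First I would fix an arbitrary distribution consistent with $H_0$. By the very definition of $H_0$, at least one index --- call it $k$ --- is such that the $k$th null $H_{0,k}$ holds, i.e.\ that oracle is testing a relationship that is genuinely independent. Because $p_k$ is by hypothesis a valid p-value for the individual test $H_{0,k}$, it is super-uniform under that null: $\Pr(p_k \le t) \le t$ for every $t \in [0,1]$.

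Next I would exploit a pointwise inequality. Since $\max_{j} p_j \ge p_k$ on every realization, the event $\{\max_j p_j \le t\}$ is contained in the event $\{p_k \le t\}$. Monotonicity of probability then gives $\Pr(\max_j p_j \le t) \le \Pr(p_k \le t) \le t$. As the distribution was an arbitrary point of $H_0$ and $k$ its guaranteed true-null index, this bound holds uniformly across all of $H_0$, so $\max_j p_j$ is a valid p-value for the composite test and the combined p-value $p$ satisfies $p \le \max_j p_j$.

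The step I expect to be the main obstacle --- or at least the one requiring the most care --- is the first: correctly handling the composite, union structure of the null. Because we never know a priori which oracle corresponds to the true independence, the argument must be uniform over all null configurations and rely only on the marginal super-uniformity of whichever $p_k$ happens to be the true null. This is precisely why no joint-independence assumption among the oracles is actually required for the bound: the IUT inequality uses only the marginal validity of the single true-null p-value together with the pointwise domination $\max_j p_j \ge p_k$, and is therefore robust to arbitrary dependence among the test statistics.
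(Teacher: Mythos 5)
Your proposal is correct. There is, however, no in-paper proof to compare it against: the paper states this lemma purely as a citation to \cite{strobl2016estimating} and supplies no argument of its own. Your intersection--union argument is the standard proof of this fact (and is essentially the argument in the cited reference): under the composite null at least one index $k$ is a true null, so $p_k$ is super-uniform there, and the pointwise domination $\max_j p_j \ge p_k$ gives $\Pr\left(\max_j p_j \le t\right) \le \Pr\left(p_k \le t\right) \le t$ for every $t$, uniformly over all distributions in $H_0$. One further point in your favor: you are right that the hypothesis in the statement --- that the $i$th oracle's output is independent of the other oracles --- is never needed. The bound uses only the marginal validity of whichever p-value corresponds to the true null together with pointwise domination, so it holds under arbitrary dependence among the test statistics; the independence clause in the lemma as written is superfluous for this inequality.
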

\begin{theorem}
For all the edges $A \rightarrow T$ that finally remain after Phase II of MMPC-p the $\max(\mathrm{pvalue},\alpha) \le \mathcal{P}(A \rightarrow T).$
\end{theorem}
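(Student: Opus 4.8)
The plan is to recast the survival of an edge $A \to T$ through Phase II as the rejection of a single composite null, and then invoke the preceding max-$p$-value Lemma of \cite{strobl2016estimating}. Concretely, for a fixed target $T$ and a candidate $A \in CP(T)$, I would associate to each conditioning set $F \subset CP(T)\setminus A$ the elementary null $I(\mathbf{X}_A \to \mathbf{X}_T \lVert \mathbf{X}_F)=0$, whose $p$-value is exactly the oracle output $DI(A\to T;F)$. The edge $A \to T$ is retained precisely when $\mathrm{assocY}\neq 0$, i.e.\ $\min_{F}\mathrm{Assoc}_\alpha(A\to T;F)>0$, which by the definition \eqref{assoc} of $\mathrm{Assoc}_\alpha$ is equivalent to $DI(A\to T;F)<\alpha$ for every $F \subset CP(T)\setminus A$. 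Thus retaining the edge is exactly rejecting the composite null $H_0$: \emph{there exists an $F$ with $I(\mathbf{X}_A \to \mathbf{X}_T \lVert \mathbf{X}_F)=0$}, in favor of $H_1$: \emph{for all $F$, $I(\mathbf{X}_A \to \mathbf{X}_T \lVert \mathbf{X}_F)>0$} — the intersection form to which the cited Lemma applies, with ``independent'' identified with $I=0$ and ``dependent'' with $I>0$.

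Next I would bound the composite $p$-value. Under $H_0$ a genuine separating set exists, and treating the per-$F$ oracle outputs as the testers of the preceding Lemma yields that the composite $p$-value is at most $\max_{F}DI(A\to T;F)$. Since the edge survived, the inner loop of Phase II inserts every $DI(A\to T;F)$ (each satisfies $DI(A\to T;F)<\alpha$, hence passes the $p\le\alpha$ insertion test) into $\mathcal{P}(A\to T)$, and the final line sets $\mathcal{P}(A\to T)=\max\{\mathcal{P}(A\to T)\}=\max_{F}DI(A\to T;F)$. Consequently the true composite $p$-value of the edge is dominated by the reported $\mathcal{P}(A\to T)$, so $\mathcal{P}(A\to T)$ is a conservative, valid $p$-value that may be floored at $\alpha$ for the downstream FDR step; this is what the statement should be read as asserting.

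The one gap to close carefully is that the algorithm maximizes only over subsets of $CP(T)\setminus A$, whereas the elementary nulls in principle range over all subsets of $[1:m]\setminus\{A,T\}$. Here I would invoke Lemma~\ref{lem:phase1}: after Phase~I we have $\mathrm{Pa}(T)\subseteq CP(T)$, and the local causal Markov property (the Corollary) guarantees that any genuine separating set for a spurious edge is contained in $\mathrm{Pa}(T)\subseteq CP(T)\setminus A$. Hence the separating set that binds the composite null under $H_0$ is among the $F$ actually tested, so the restricted maximum still dominates the true composite $p$-value and no relevant elementary null is dropped.

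I expect the main obstacle to be justifying the independence-of-oracles hypothesis required by the cited Lemma for the family of DI tests indexed by $F$, since these tests share the same data and overlapping conditioning sets; the max-$p$-value bound must be shown to survive this correlation (or the Lemma's hypothesis explicitly assumed). Relatedly, I would make the $\max(\cdot,\alpha)$ bookkeeping precise: every surviving edge has all $DI(A\to T;F)<\alpha$ and therefore $\mathcal{P}(A\to T)<\alpha$, so the assertion cannot be a numerical inequality of $\mathcal{P}(A\to T)$ against $\alpha$, but rather the claim that $\mathcal{P}(A\to T)$ upper-bounds the edge's true composite $p$-value. Pinning down that reading and verifying the Lemma's hypotheses for the correlated DI testers is the crux of the argument.
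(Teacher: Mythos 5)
Your proposal follows essentially the same route as the paper's proof: recast the retention of an edge through Phase II as the composite hypothesis test of the cited Lemma from \cite{strobl2016estimating}, use Lemma~\ref{lem:phase1} to guarantee $\mathrm{Pa}(T) \subseteq CP(T)$ so that the relevant separating set lies among the tested conditioning sets, and bound the edge's p-value by the recorded maximum $\max_{F} DI(A\to T;F) = \mathcal{P}(A\to T)$. Your write-up is in fact more careful than the paper's on exactly the two points you flag --- the restriction of the maximum to $F \subset CP(T)\setminus A$ and the reading of $\max(\cdot,\alpha)$ --- and your stated ``main obstacle'' dissolves on closer inspection: since the composite null here is a union null (at least one elementary null is true), the max-p-value bound holds under arbitrary dependence among the testers, so no independence-of-oracles hypothesis actually needs to be verified.
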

\begin{proof}
After completion of Phase I, we wish to test whether the edge is present by conducting independence tests. We construct a hypothesis test with the following null and alternative:
\begin{equation}
\label{eqn:mmhchypothesis2}
H_0: A \rightarrow T \quad \text{is absent}; \quad H_1: A \rightarrow T \quad \text{is present.}
\end{equation}

where $T$ represents the target node. According to Lemma~\ref{lem:phase1} and referring to lines 9 to 14 of Algorithm~\ref{algo:mmpcp}, the p-value for parents for a given target $T$ will always be less than $\alpha$, and would never be dropped.

Since all parents are in $CP$, testing for $H_1$ in \eqref{eqn:mmhchypothesis2} is equivalent to testing $H_1$ in \eqref{eqn:mmhchypothesis}. Similarly, testing for $H_0$ in \eqref{eqn:mmhchypothesis2} is equivalent to testing $H_0$ in \eqref{eqn:mmhchypothesis}. Hence, the hypothesis test defined in \eqref{eqn:mmhchypothesis2} is equivalent to the hypothesis test defined in \eqref{eqn:mmhchypothesis}. Hence, our Algorithm~\ref{algo:mmpcp}, keeps track of the p-value by bounding $\max(\mathrm{pvalue},\alpha) \le \max\left[\max \limits_{\forall F \mathrm{in~ line~ 12~for} A} DI(A \rightarrow T;F),\alpha \right]=\mathcal{P}(A \rightarrow T)$. 
\end{proof}

\textbf{FDR Control:}We define $FDR_{BY}(\beta)$ given by:
\begin{align}\label{eq:control}
FDR_{BY}(\beta)&\triangleq \frac{m \beta \Sigma_{i=1}^{m}\frac{1}{i}}{\max\{R,1\}}
\end{align}
where $R$ is the number of edges retained at the end of Phase II of MMPC-p. The value $\beta^{*}$ satisfies:
\begin{align}\label{eq:controlopt}
\beta^{*} \triangleq \underset{\beta}{\mathrm{arg max}}\{FDR(\beta) \le q\}.
\end{align}
Given a target false positive rate $q$, deleting directed edges $A \to T$ whose $\mathcal{P}(A \to T) > \beta^{*}$ ensures consistent FDR control provided $\beta^{*} \leq \beta$ in Algorithm~\ref{algo:mmpcp}. 

\begin{theorem}
The MMPC-p algorithm with a perfect DI oracle is sound and complete.
\end{theorem}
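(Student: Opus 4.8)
The plan is to split ``sound and complete'' into the two standard claims and prove them in the right order: first \emph{completeness} (every true parent $v\in\mathrm{Pa}(j)$ survives in $CP(j)$ at termination), then \emph{soundness} (every node left in $CP(j)$ after Phase II is a genuine parent). The first step I would take is to pin down the behaviour of the association measure under a perfect oracle. Since a perfect $DI$ tester reports $p$-value $\ge\alpha$ exactly when $I(\mathbf{X}_i\rightarrow\mathbf{X}_j\lVert\mathbf{X}_F)=0$ and $p$-value $0$ otherwise, definition \eqref{assoc} gives the clean equivalence $\mathrm{Assoc}_{\alpha}(i\to j;F)>0 \iff I(\mathbf{X}_i\rightarrow\mathbf{X}_j\lVert\mathbf{X}_F)>0$. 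In particular the perfect oracle has zero Type II error, so Assumption \ref{assum5} holds and Lemma \ref{lem:phase1} applies verbatim, yielding $\mathrm{Pa}(j)\subseteq CP(j)$ after Phase I.

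For completeness I would then show Phase II never deletes a true parent. Fix $v\in\mathrm{Pa}(j)$. By Definition \ref{DGraph}, $I(v\to j\lVert\mathbf{X}_{[1:m]-\{v,j\}})>0$, and Assumption \ref{assum4} propagates positivity to $I(v\to j\lVert\mathbf{X}_F)>0$ for every $F\subset[1:m]-\{v,j\}$, hence for every $F\subset CP(j)\setminus v$. Thus $\mathrm{Assoc}_{\alpha}(v\to j;F)>0$ for all admissible $F$, so the quantity $\mathrm{assocY}$ at line \ref{assocY} is strictly positive and the deletion branch is never taken: $v$ is retained, and completeness follows.

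For soundness I would argue by contradiction. Suppose some $w\in CP(j)$ survives Phase II but $w\notin\mathrm{Pa}(j)$. The idea is to exhibit one conditioning set under which $w$'s directed information to $j$ vanishes, which will force its deletion. Because completeness has already given $\mathrm{Pa}(j)\subseteq CP(j)$ and $w\notin\mathrm{Pa}(j)$, we have $\mathrm{Pa}(j)\subseteq CP(j)\setminus w$, so $F=\mathrm{Pa}(j)$ is a legal choice in the minimization of line \ref{assocY}. Applying the Local Causal Markov Property (the corollary) with $A=\{w\}\subseteq[1:m]-\{j\cup\mathrm{Pa}(j)\}$ gives $I(\mathbf{X}_w\rightarrow\mathbf{X}_j\lVert\mathbf{X}_{\mathrm{Pa}(j)})=0$, whence $\mathrm{Assoc}_{\alpha}(w\to j;\mathrm{Pa}(j))=0$. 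Then $\mathrm{assocY}=0$ for $Y=w$, the deletion branch fires, and $w$ is removed --- contradicting $w\in CP(j)$ after Phase II. Hence every surviving node is a true parent.

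I expect the main obstacle to be soundness, and specifically the \emph{sequencing}: the separating set that annihilates a spurious edge $w\to j$ is precisely $\mathrm{Pa}(j)$, so the soundness argument is only available once completeness has guaranteed $\mathrm{Pa}(j)\subseteq CP(j)$ --- the two halves are not symmetric and must be proved in this order. A secondary point I would verify for rigor is that Phase I is well defined and terminates: any node already in $CP(j)$ has min-association $0$ (choose any $F$ containing it, so the conditional directed information is trivially zero), hence is never re-selected by $\mathrm{amma}_{\alpha}$, so each iteration either adds a genuinely new node or halts, and finiteness of $[1:m]$ closes the argument.
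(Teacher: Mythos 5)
Your proof is correct and follows essentially the same route as the paper's: the perfect oracle trivially satisfies Assumption \ref{assum5}, so Lemma \ref{lem:phase1} gives $\mathrm{Pa}(j)\subseteq CP(j)$ after Phase I, and the Local Causal Markov Property applied with conditioning set $F=\mathrm{Pa}(j)$ forces every non-parent to be deleted in Phase II. The only difference is that you make explicit two points the paper leaves implicit --- that Assumption \ref{assum4} guarantees true parents are never removed in Phase II (so $\mathrm{Pa}(j)$ remains available as a conditioning set when each spurious node is examined) and that Phase I terminates --- which strengthens rather than changes the argument.
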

\begin{proof}
A perfect DI oracle means that $DI(i,j;F)=1$ if $I(\mathbf{X}_i \to \mathbf{X}_j \lVert F) =0$ and $DI(i,j;F)=0$ if $I(\mathbf{X}_i \to \mathbf{X}_j \lVert F) > 0$. With this strong assumption and Lemma \ref{lem:phase1}, for any node $j$, $\mathrm{Pa}(j) \subseteq CP(j)$ after the first phase of MMPC-p.

Next, we show that if $i$ is not in $\mathrm{Pa}(j)$ then $i \notin CP(j)$ after the second phase. The reason is that one of the subsets of $CP(j)$ after the first phase has to equal $\mathrm{Pa}(j)$ (as shown in the previous paragraph). Suppose, $i \notin \mathrm{Pa}(j)$, then we know that $I(\mathbf{X}_i \to \mathbf{X}_j \lVert \mathrm{Pa}(j)) =0$. Therefore, $DI(i,j;\mathrm{Pa}(j))=1$. This means that for any $\alpha>0$, $assocY=0$ at Line \ref{assocY} when $Y=i$ if $i \in CP(j)$ after Phase I. This would cause $i$ to be discarded in the second phase.
\end{proof}

\begin{remark}
This algorithm is much simpler than the one it is inspired by: MMHC. The definition of DI and the role of time in its computation simplify the algorithm and its proof. Furthermore, we do not have problems of ``descendants'' staying after the two phases of the algorithm (there is a second part of the MMHC algorithm in the original paper where pairs were only considered if $i \in CP(j)$ and $j \in CP(i)$, that is not required here). However, the algorithm still retains the robustness of MMHC.
\end{remark}

\section{DI Testers for Linear Models and Gaussian Processes}

Let the scalar variable $X$ follow a memory-$1$ autoregressive linear model with i.i.d.\ Gaussian noise given by $X(t+1) = \Phi(t)X(t) + \xi(t)$ where $\xi(t) \sim {\cal N}(0,\sigma^2)$ and  $\xi(t)$ is independent across time. Generalizing to a set of random variables with an underlying Granger causal graph (the DI graph) in the sense of Section \ref{formal}. Now given the DI graph i.e., for variable $i$ there is a set $Pa(i)$ (that does not depend on $t$) such that $X_{i, t+1} = \sum_{j \in Pa(i)}\Phi_{ij}X_{j, t} + \Phi_{ii} X_{i,t}+ \xi_i(t)$ where $\phi_{ij}$'s are the coupling coefficients. Let $T$ denote the number of time points sampled for every variable $i$. Let the number of i.i.d.\ copies of these time series is $N$. Every variables essentially is observed $NT$ times, $T$ across time for each i.i.d.\ sample. For jointly Gaussian autocorrelated time series processes, DI can be computed by \cite{amblard2009relating}. 
$I(\mathbf{X}_i \rightarrow \mathbf{X}_j \lVert \mathbf{X}_A) = \frac{1}{2}\log \frac{\epsilon_\infty^2(j,A)}{\epsilon_\infty^2(j,A \cup i)}$. Here, $\epsilon_{\infty}^2(i,A)$ is the asymptotic prediction error of $X_{i,t}$ given the past of the process $\mathbf{X}_i$, i.e. $\mathbf{X}_i^{(t-1)}$ and the past of process $\mathbf{X}_{A}^{(t-1)}$. Hence, DI testing boils down to testing whether both the mean squared variances are equal. Therefore, we form the mean squared test statistic in two ways leading to two different DI testers.

{\emph{\bf Test 1$(i,j, A)$}}: We follow the standard approach used in Granger causal studies \cite{barnett2014mvgc}. If $\Phi$ is constant, we create $T-1 \times 2$ matrix consisting of rows $(X_{i,t},X_{i,t+1})_{t=1}^{T-1}$  by taking all consecutive pairs from every time series stacking them vertically. Now, we stack these matrices vertically again to create an $NT-1 \times 2$ matrix $\mathbf{\tilde{X}}_i$. Let $\mathbf{\tilde{X}}_i[1,:]$ refer to the first column and let $\tilde{X}_i[2,:]$ refer to the second column. We solve the following two approximations through ordinary least squares regression:
\begin{enumerate}
\item $\min \lVert \mathbf{\tilde{X}}_j[2, :1] -\sum_{l \in A} \tilde{\Phi}^1_{lj} \mathbf{\tilde{X}}_{l}[1,:] \rVert_2$
\item $\min \lVert \mathbf{\tilde{X}}_j[2, :1] - \sum_{l \in A \cup i} \tilde{\Phi}^1_{lj} \mathbf{\tilde{X}}_{l}[1,:] \rVert_2$
\end{enumerate}

Let $\text{mse}_1$ be the mean squared error for the first least-squares approximation and $\text{mse}_2$ be the mean squared error for the second approximation. Then $(NT-1)\ln(\frac{\text{mse}_1} {\text{mse}_2})$ follows a $\chi^2$ distribution with $1$ degree of freedom and the p-value corresponding to the null hypothesis corresponds to the p-value of the null hypothesis $I(X_i \to X_j \| X_A) = 0$ (when the process is stationary and jointly Gaussian). This is, therefore, a DI testing oracle and we call it \textbf{Tester 1}.

{\emph{\bf Test 2$(i,j, A)$}}: The issue with 
\textbf{Tester 1} is that it does regression with highly autocorrelated samples of the same time series stacked vertically. This is a good practice when the number of i.i.d.\ copies $N$ is small. However, when $N$ is comparable to $T$, autocorrelation amongst a specific process would decrease the performance of the tester. Instead of regression through stacking as in the previous case, we compute the asymptotic prediction error as follows. 

1) We do two separate regressions for each pair of time points $(t+1,t)$ with $N$ i.i.d.\ samples: one using variable $i,A$ as a covariate to predict $j$ and another without $i$ and only with set $A$. 

2) Now consider the $N$ residues obtained after the regressions as $\epsilon_{j,A\cup i,t,n},~1 \leq n \leq N$ for the first regression. Similarly, let the residues for the second regression be $\epsilon_{j,A,t,n}$.

3) Denote $\Sigma_{j,A \cup i}$ to be the covariance matrix whose entries are indexed by $(t_1,t_2),~ t_1 \in [1:T],~t_2 \in [t:T]$. $\Sigma_{j,A \cup i}[t_1,t_2]$ is the covariance between $\epsilon_{j,A \cup i,t_1,\cdot}$ and $\epsilon_{j,A \cup i, t_2,\cdot}$ averaged over the $N$ i.i.d samples. Similarly, let $\Sigma_{j,A}[t_1,t_2]$ is the covariance between matrix calculated from the $\epsilon_{j,A,\cdot}$ variables. Now, since all variables are jointly Gaussian, the residues are also jointly Gaussian. Let $\Sigma_{j,A}^{(t)}$ be the covariance sub-matrix involving points with time index until $t$. Therefore, we compute the asymptotic prediction error given by the expression \cite{amblard2009relating}:
$
 \epsilon_\infty^2(j,A) \approx\frac{\det{\Sigma_{j,A}^{(T)}}}{\det{\Sigma_{j,A}^{(T-1)}}}.$
 Similar expressions hold for $\epsilon_\infty^2(j,A \cup i)$. This is motivated by the fact that for jointly Gaussian variables $x_1 \ldots x_n$, the squared prediction error of $x_n$ given the other is $\frac{\det{\Sigma^{(n)}}}{\det{\Sigma^{(n-1)}}}$, where $\Sigma^{(t)}$ is the bottom right $t \times t$ sub-matrix of $\Sigma$.
4) Under the null hypothesis, $(N-1)* [\log (\epsilon^2_{\infty}(j,A)) - \log (\epsilon^2_{\infty}(j,A \cup i))]$ is distributed with $\chi^2$ with $1$ degree of freedom. We call this \textbf{Tester 2}.

\section{Comparative Study}
We perform a comparative study similar to reference \cite{cooklearning}. We compare the results of MMPC-p to modified GC \cite{granger1969investigating,cooklearning,barnett2014mvgc} and modified PC \cite{shafer1995p,cooklearning}. For MMPC-p and PC we use \textbf{Tester 1} and \textbf{Tester 2}; for modified GC we use only \textbf{Tester 1}. We fix an $\alpha$ value for all the testers. For the sake of brevity, we refer to modified GC and modified PC as GC and PC, respectively in the remainder of this paper.

\textbf{Synthetic Datasets}: We generate synthetic datasets as described in \cite{cooklearning}. For a given density $\rho$ and number of nodes $N$, we generate 50 datasets consisting of directed graphs of $N$ nodes that contain at least one $N$-cycle, with coefficients of the AR(1) model in $\pm [0.2, 0.8]$ (before normalizing by the largest eigenvalue), such that the matrix has a density $\rho$. This method of constructing AR(1) models will generate matrices with very small eigenvalues that is fixed by adding a scaled identity to the AR(1) model (essentially adding feedback loops $X_i[t-1] \to X_i[t]$). We do this 50 times for each of $N = 10, 15, 20, 25, 30, 50$ and for densities $\rho=0.1,0.2,0.3$. For each of the datasets we generate 1000 samples, in the form of $N_{\text{series}}$ time series with $N_{\text{samples}}$ samples each, such that $N_{\text{series}}N_{\text{samples}} = 1000$. 

\textbf{Metrics}: We consider \textit{omission error rate}: false negative edges normalized by the total number of edges and \textit{commission error rate}: false positive edges normalized by the total number of non-edges.

\textbf{Discussion}: 
The leftmost plots in Figure \ref{fig:com-test1}, Figure \ref{fig:om-test1}, Figure \ref{fig:com-test2}, and Figure \ref{fig:om-test2} indicate that for large and sparse graphs both the omission and commission errors are well controlled for MMPC-p. The rightmost plots suggest that when the density is higher, commission errors for MMPC-p are still well controlled but the omission error increases. 

We conduct analysis for 50 variables separately in Figure~\ref{fig:50vars}. The results indicate that PC has high commission error with \textbf{Tester 1}, and GC cannot run because of the conditioning set being large. However, both commission and omission errors for MMPC-p with \textbf{Tester 1} are lower than PC and GC. 
\begin{figure}
\centering
\includegraphics[width=\linewidth]{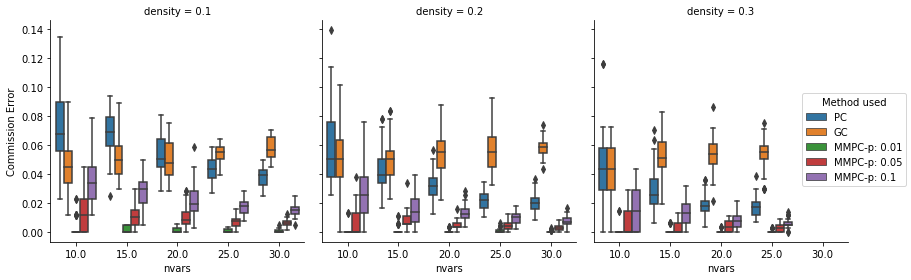}
\caption{Commission errors for all methods with \textbf{Tester 1}.}
\label{fig:com-test1}
\end{figure}
\begin{figure}
\centering
\includegraphics[width=\linewidth]{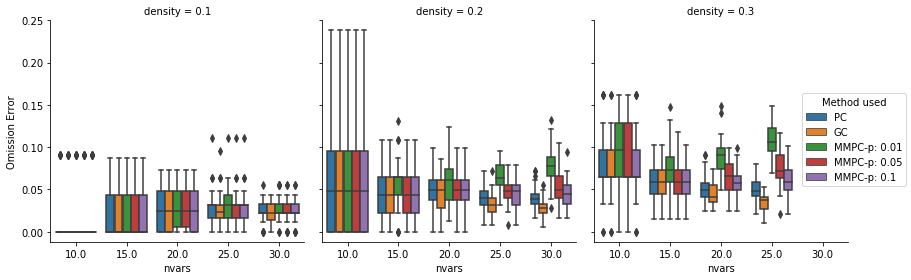}
\caption{Omission errors for all methods with \textbf{Tester 1.}}
\label{fig:om-test1}
\end{figure}

\begin{figure}
\centering
\includegraphics[width=\linewidth]{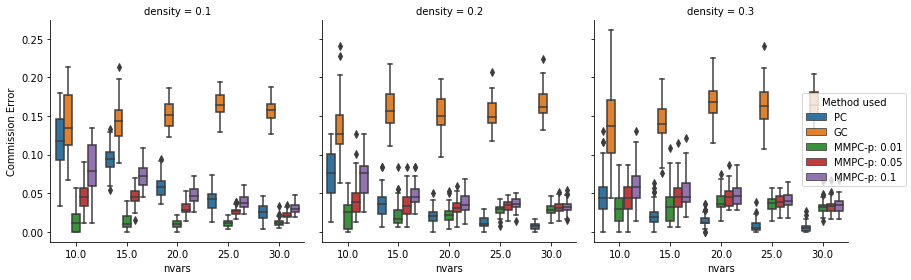}
\caption{Commission errors for all methods with \textbf{Tester 2}, except for GC where we use \textbf{Tester 1}.}
\label{fig:com-test2}
\end{figure}
\begin{figure}
\centering
\includegraphics[width=\linewidth]{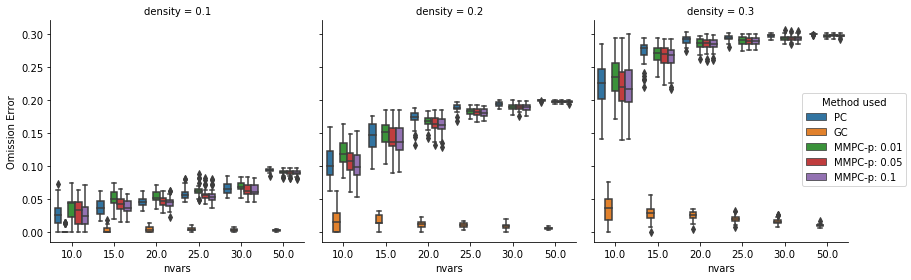}
\caption{Omission errors for all methods with \textbf{Tester 2}, except for GC where we use \textbf{Tester 1}.}
\label{fig:om-test2}
\end{figure}

\begin{figure}
\begin{subfigure}{.5\textwidth}
 \centering
 \includegraphics[width=\linewidth]{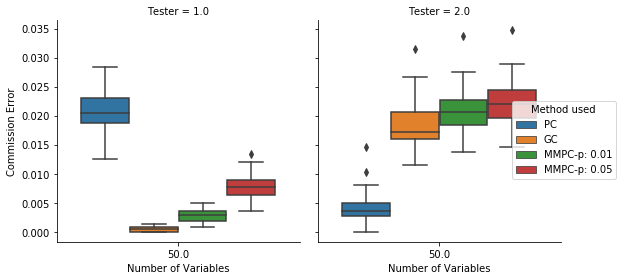}
\end{subfigure}%
\begin{subfigure}{.5\textwidth}
 \centering
 \includegraphics[width=\linewidth]{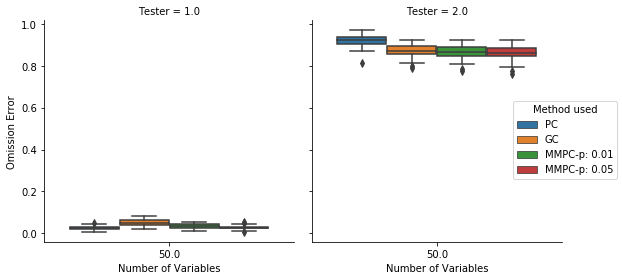}
\end{subfigure}
\caption{\small Commission and omission errors for all methods \textbf{Tester 1} and \textbf{Tester 2} with 50 variables.}
\label{fig:50vars}
\end{figure}

\section{Global Development Case Study}
We consider a dataset of over 4,000 random processes, most of which begin in 1960, across a wide range of development issue areas. The source is largely international organizations like the World Bank, the Food and Agriculture Organization of the United Nations, the UNESCO Institute for Statistics, and the International Monetary Fund and has been standardized (all series structured identically with detailed metadata) in the IFs platform, a free, open-source long-term global integrated assessment model. For each process in the dataset, we have 186 time series samples, each of them corresponding to a country. The length varies between 50 and 60 time steps. 

We compare the results of MMPC-p with the causal connections used in the IFs model. The connections in IFs are the result of a deductive approach, i.e.\ domain knowledge based on academic literature and conceptually sound statistics. 
In Table~\ref{tab:mmpctester1results}, we show the parents obtained for four series (selected arbitrarily): AGCropProductionFAO, GDPCurDol, LaborAgricultureTotMale, and Population.
\begin{table}[ht]
    \centering
    \caption{Parents obtained through MMPC-p using \textbf{Tester 1}.}
    \label{tab:mmpctester1results}
    \scalebox{1}{
        \begin{tabular}{|l|p{0.3\linewidth}|p{0.3\linewidth}|}
            \hline
            Series & Parent (max p-value)  & Parent (IFs) \\  \hline
            AGCropProductionFAO & AGCroptoFoodFAO (1e-9), Market for PC sales (2.22e-6) & Land Crop, Change in Precipitation, Annual Temperature Change, Land Equipped for Irrigation, Labor in Agriculture, Arable Land\\ \hline
            GDP Current Dollars & GDP (1e-9) & Labor\\ \hline
            LaborAgricultureTotal\%Male & Revenue Contribution (3.1e-5) & Value added Agriculture\\ \hline
            Population & Internet Subscribers (1e-9), Cooking Oil, Fuel and Coal (1e-9) & Birth, Death, Migration\\ \hline
        \end{tabular}}
    \end{table}

For the most part, the parents identified by MMPC-p do not match the causal drivers in IFs. They are a mixed bag: some are semantically similar to the IFs parents, such as AGCroptoFoodFAO and GDP; we have validated them to be semantically similar with domain experts.  Others like Market for PC sales, Internet Subscribers, and Cooking Oil are spurious.  

One of the main reasons for the mismatch between MMPC-p and the IFs model is that many variables used in the model do not have a direct corresponding data series. For example, one of the two direct drivers of crop production is yield, measured as tons per hectare. But the variable for yield used in the IFs model is initialized using data series for crop production and crop land (the quotient being yield). So, MMPC-p is unable to identify yield as a direct parent of crop production. Another reason for the mismatch is that the dataset used by MMPC-p contains many series that are aggregated for use in the IFs model, and are not directly causally connected to other variables. For example, calories per capita is an important development indicator, and a direct driver of hunger, but is initialized in the IFs model through the sum of ten series for calories per capita from different food sources. Finally, a technical reason for the mismatch could be that many of these relations are non-linear in nature. Other testers that do not require linearity could be used with more available data and perhaps yield results more similar to IFs.

\section{Conclusion}
\label{sec:conclusion}

In this paper, we have proposed a new algorithm for learning the Granger causal structure of observational time series and endowed it with strong FDR control.  Named MMPC-p, it is inspired by the hill-climbing MMHC approach for causal structure learning in non-time series observations and inherits its scalability to large numbers of random processes.  We conduct a comprehensive comparison to GC and PC with two different DI testers on large sparse graphs, finding that the proposed algorithm has better FDR control and scalability than the competing algorithms.  We have also taken the first steps to using the algorithm in practice for a global development use case as an alternative to years-long modeling efforts. Our results are observed to be semantically similar for some variables when compared to the existing ground-truth. There is still room for improvement in better aligning with international studies practice; in fact, one piece of future work is to use the human-validated relationships not only as a comparison point for validating algorithm outputs, but as input for an improved algorithm that is a hybrid of deduction and data-driven inference.

\bibliographystyle{IEEEtran}
\bibliography{grangercausal}

\begin{thebibliography}{10}
\providecommand{\url}[1]{#1}
\csname url@samestyle\endcsname
\providecommand{\newblock}{\relax}
\providecommand{\bibinfo}[2]{#2}
\providecommand{\BIBentrySTDinterwordspacing}{\spaceskip=0pt\relax}
\providecommand{\BIBentryALTinterwordstretchfactor}{4}
\providecommand{\BIBentryALTinterwordspacing}{\spaceskip=\fontdimen2\font plus
\BIBentryALTinterwordstretchfactor\fontdimen3\font minus
  \fontdimen4\font\relax}
\providecommand{\BIBforeignlanguage}[2]{{%
\expandafter\ifx\csname l@#1\endcsname\relax
\typeout{** WARNING: IEEEtran.bst: No hyphenation pattern has been}%
\typeout{** loaded for the language `#1'. Using the pattern for}%
\typeout{** the default language instead.}%
\else
\language=\csname l@#1\endcsname
\fi
#2}}
\providecommand{\BIBdecl}{\relax}
\BIBdecl

\bibitem{HughesH2006}
B.~B. Hughes and E.~E. Hildebrand, \emph{Exploring and Shaping International
  Futures}.\hskip 1em plus 0.5em minus 0.4em\relax New York: Paradigm, 2006.

\bibitem{barnett2014mvgc}
L.~Barnett and A.~K. Seth, ``The {MVGC} multivariate {G}ranger causality
  toolbox: A new approach to {G}ranger-causal inference,'' \emph{Journal of
  Neuroscience Methods}, vol. 223, pp. 50--68, Feb. 2014.

\bibitem{quinn2015directed}
C.~J. Quinn, N.~Kiyavash, and T.~P. Coleman, ``Directed information graphs,''
  \emph{IEEE Transactions on Information Theory}, vol.~61, no.~12, pp.
  6887--6909, Dec. 2015.

\bibitem{eichler2012graphical}
M.~Eichler, ``Graphical modelling of multivariate time series,''
  \emph{Probability Theory and Related Fields}, vol. 153, no. 1-2, pp.
  233--268, Jun. 2012.

\bibitem{cooklearning}
J.~W. Cook, D.~Danks, and S.~M. Plis, ``Learning dynamic structure from
  undersampled data,'' in \emph{Proceedings of the UAI Causality Workshop},
  Sydney, Australia, Aug. 2017, p.~7.

\bibitem{moneta2011causal}
A.~Moneta, N.~Chla{\ss}, D.~Entner, and P.~Hoyer, ``Causal search in structural
  vector autoregressive models,'' in \emph{Proceedings of the Neural
  Information Processing Systems Mini-Symposium on Causality in Time Series},
  Granada, Spain, Dec. 2011, pp. 95--114.

\bibitem{strobl2016estimating}
E.~V. Strobl, P.~L. Spirtes, and S.~Visweswaran, ``Estimating and controlling
  the false discovery rate for the {PC} algorithm using edge-specific
  p-values,'' \emph{arXiv preprint arXiv:1607.03975}, May 2017.

\bibitem{armen2011unified}
A.~P. Armen and I.~Tsamardinos, ``A unified approach to estimation and control
  of the false discovery rate in {B}ayesian network skeleton identification,''
  in \emph{Proceedings of the European Symposium on Artificial Neural Networks,
  Computational Intelligence and Machine Learning}, Bruges, Belgium, Apr. 2011,
  pp. 303--308.

\bibitem{tsamardinos2006max}
I.~Tsamardinos, L.~E. Brown, and C.~F. Aliferis, ``The max-min hill-climbing
  {B}ayesian network structure learning algorithm,'' \emph{Machine Learning},
  vol.~65, no.~1, pp. 31--78, Oct. 2006.

\bibitem{hyttinen2017constraint}
A.~Hyttinen, S.~Plis, M.~J{\"a}rvisalo, F.~Eberhardt, and D.~Danks, ``A
  constraint optimization approach to causal discovery from subsampled time
  series data,'' \emph{International Journal of Approximate Reasoning},
  vol.~90, pp. 208--225, Nov. 2017.

\bibitem{plis2015rate}
S.~Plis, D.~Danks, C.~Freeman, and V.~Calhoun, ``Rate-agnostic (causal)
  structure learning,'' in \emph{Advances in Neural Information Processing
  Systems 28}, Montr{\'e}al, Canada, Dec. 2015, pp. 3303--3311.

\bibitem{gong2017causal}
M.~Gong, K.~Zhang, B.~Sch{\"o}lkopf, C.~Glymour, and D.~Tao, ``Causal discovery
  from temporally aggregated time series,'' in \emph{Proceedings of the
  Conference on Uncertainty in Artificial Intelligence}, Sydney, Australia,
  Aug. 2017, p. 269.

\bibitem{brodersen2015inferring}
K.~H. Brodersen, F.~Gallusser, J.~Koehler, N.~Remy, and S.~L. Scott,
  ``Inferring causal impact using {B}ayesian structural time-series models,''
  \emph{Annals of Applied Statistics}, vol.~9, no.~1, pp. 247--274, Mar. 2015.

\bibitem{lozano2009spatial}
A.~C. Lozano, H.~Li, A.~Niculescu-Mizil, Y.~Liu, C.~Perlich, J.~Hosking, and
  N.~Abe, ``Spatial-temporal causal modeling for climate change attribution,''
  in \emph{Proceedings of the ACM SIGKDD International Conference on Knowledge
  Discovery and Data Mining}, Paris, France, Jun.--Jul. 2009, pp. 587--596.

\bibitem{chaudhry2017uncertainty}
A.~Chaudhry, P.~Xu, and Q.~Gu, ``Uncertainty assessment and false discovery
  rate control in high-dimensional {G}ranger causal inference,'' in
  \emph{Proceedings of the International Conference on Machine Learning},
  Sydney, Australia, Aug. 2017, pp. 684--693.

\bibitem{jiao2015justification}
J.~Jiao, T.~A. Courtade, K.~Venkat, and T.~Weissman, ``Justification of
  logarithmic loss via the benefit of side information,'' \emph{IEEE
  Transactions on Information Theory}, vol.~61, no.~10, pp. 5357--5365, Oct.
  2015.

\bibitem{amblard2009relating}
P.-O. Amblard and O.~J.~J. Michel, ``Relating {G}ranger causality to directed
  information theory for networks of stochastic processes,'' \emph{arXiv
  preprint arXiv:0911.2873}, Nov. 2011.

\bibitem{granger1969investigating}
C.~W.~J. Granger, ``Investigating causal relations by econometric models and
  cross-spectral methods,'' \emph{Econometrica}, vol.~37, no.~3, pp. 424--438,
  Aug. 1969.

\bibitem{shafer1995p}
P.~Spirtes, C.~Glymour, and R.~Scheines, \emph{Causation, Prediction and
  Search}.\hskip 1em plus 0.5em minus 0.4em\relax Cambridge, MA: MIT Press,
  2000.

\end{thebibliography}

\end{document}